\documentclass{article}

\usepackage[utf8]{inputenc}
\usepackage[T1]{fontenc}
\usepackage{biblatex}
\usepackage{hyperref}
\usepackage{url}
\usepackage{booktabs}
\usepackage{amsmath}           
\usepackage{amsfonts}
\usepackage{amssymb}
\usepackage{amsthm}
\usepackage{microtype}
\usepackage{graphicx}
\usepackage{tikz}
\usepackage{geometry}
\theoremstyle{plain}
\newtheorem{proposition}{Proposition}
\newtheorem{corollary}[proposition]{Corollary}
\theoremstyle{remark}
\newtheorem{remark}[proposition]{Remark}

\newcommand{\egirth}[1]{g_{#1}}
\newcommand{\mult}[1]{\lambda_{#1}}
\newcommand{\egseq}{\sigma}
\newcommand{\sigm}{\mathrm{sigm}}   
\newcommand{\bh}{\mathbf{h}}
\newcommand{\be}{\mathbf{e}}
\newcommand{\bbm}{\mathbf{m}}       
\newcommand{\bg}{\mathbf{g}}
\newcommand{\EGAGNN}{\textsc{EGAGNN}}

\date{}
\author{
Lilian Marey\thanks{LTCI, Télécom Paris, Institut Polytechnique de Paris, Palaiseau, France (\texttt{lilian.marey@telecom-paris.fr}).}
\and
Charlotte Laclau\thanks{LTCI, Télécom Paris, Institut Polytechnique de Paris, Palaiseau, France (\texttt{charlotte.laclau@telecom-paris.fr}).}
}
\title{Edge-Girth as a Structural Edge Feature for Graph Neural Networks}

\begin{document}
\maketitle

\begin{abstract}
Graph neural networks (GNN) based on message passing are provably no more powerful
than the one-dimensional Weisfeiler--Leman colour-refinement test (1-WL):
starting from a uniform colouring, each node is repeatedly recoloured as a
function of its own colour and the multiset of its neighbours' colours, and
two graphs the process cannot tell apart receive identical representations,
however deep or wide the network. A common remedy augments node or edge
features with precomputed structural descriptors, most often counts of a
fixed small subgraph such as triangles or longer cycles, but such counts
require committing in advance to the size of the substructure being counted,
a choice usually made blind to the data. We study a descriptor that avoids
this choice. The \emph{edge-girth} of an edge is the length of a shortest
cycle through it, and its \emph{multiplicity} is the number of such shortest
cycles; together they form a per-edge invariant that reports cycles of
arbitrary length and is computable exactly by a single breadth-first search
per edge. Like other structural encodings, it is computed once from the graph
alone, before and independently of any supervision, and is therefore not
specific to a downstream objective. We evaluate it on two tasks of different
kinds. 
Injected into a gated message-passing architecture, \EGAGNN{}, it
reaches a test MAE a factor three below the closest gated comparator on the
\textsc{Zinc}-12k regression benchmark at $104$k parameters; compared against
bounded cycle-counting descriptors under the same architecture, it matches
only a dictionary that explicitly counts cycles up to length eight, which
requires twice as many channels, while a dictionary capped at length four
performs no better than no structural information at all.
On graph discrimination we prove a matching limitation: on
graphs where every edge sees the same number of shortest cycles of the same
length, the descriptor becomes constant and any model built on it collapses
back to the 1-WL bound. This prediction holds without exception across all 400 pairs of the BREC graph-isomorphism benchmark: not one of the 90 such pairs is distinguished.
\end{abstract}

Code and datasets are available at 
\url{https://anonymous.4open.science/r/GDDL_EGAGNN-5154}

\section{Introduction}
\label{sec:intro}

Message-passing neural networks (MPNNs) are limited in expressive power by the
one-dimensional Weisfeiler-Leman test (1-WL): two graphs that $1$-WL fails to
separate receive identical representations, no matter how many layers or
parameters are used \cite{xu2018powerful,morris2019weisfeiler}. 
A standard remedy is to enrich node or edge inputs with precomputed structural
descriptors, most commonly counts of small subgraphs \cite{bouritsas2022improving}
or random-walk and spectral encodings \cite{dwivedi2021graph}. Higher-order
approaches such as cellular complexes \cite{bodnar2021weisfeiler} make cycle structure explicit at the message-passing level, while recent work on the cycle counting expressiveness of MPNNs \cite{huang2022boosting} has quantified the limits of cycle-based augmentation.
Subgraph counting carries an assumption that is
rarely made explicit: the informative substructure must be \emph{bounded in
size and fixed in advance}. The limitation is one of coverage rather than of
budget. A dictionary of motifs of size at most $k$ cannot report a cycle of
length $k+1$, however much computation is spent on it; enlarging $k$ trades
one ceiling for a higher one but never removes it. The choice is consequential
and is made blind: on a molecular dataset such as \textsc{Zinc}, a per-edge cycle dictionary capped at
$k=4$ leaves the model no better than one given no structural descriptor at
all, while the same dictionary at $k=8$ recovers almost all of the available
gain.

In this article, we study a descriptor that does not require this choice. The \emph{edge-girth}
$\egirth{e}$ of an edge $e$ is the length of a shortest cycle containing $e$,
with $\egirth{e} = \infty$ when no such cycle exists; alongside it, we carry
$\mult{e}$, the number of cycles of that length through $e$. The pair is
attached to each edge, is invariant under isomorphism, and reports a cycle of
unbounded length: an edge lying only on a $20$-cycle is described as
faithfully as one lying on a triangle, with no dictionary to enlarge. It is
also exactly computable, by a single breadth-first search per edge: the
quantity equals one plus the length of a shortest path between the endpoints
of $e$ that avoids $e$, the object of the replacement path problem
\cite{bernstein2010nearly}. It carries meaning in applications, governing
transition rates in active flow networks \cite{woodhouse2016stochastic}, and
short cycles through an edge -- exactly the small edge-girth values -- are a
dominant cause of decoding failure in the Tanner graphs of LDPC codes
\cite{xu2025ldpc}. The collection of these values over all edges, the
\emph{edge-girth sequence}, has recently been characterised up to
realizability \cite{marey2026realizability}.

That characterisation also indicates how the descriptor should \emph{not} be
used: the sequence is additive under vertex identification and does not
determine the number of vertices of the graph, so it is a poor graph
signature. We therefore propose to treat edge-girth locally, injecting the per-edge pair into message passing so that it modulates the flow of node information rather
than summarising the graph on its own. The resulting model, \EGAGNN{} (Edge-Gated Aligned GNN), gates
each message by a learned function of the edge it travels along, and lets edge
states absorb the context of their endpoints so that edge-girth information
diffuses beyond immediate neighbourhoods.

The descriptor is effective on molecular property prediction and provably
powerless on a precisely identifiable family of graphs. On \textsc{Zinc}-12k
at a matched parameter budget, \EGAGNN{} reaches a test MAE of
$0.0932 \pm 0.0035$, a factor three below the closest gated comparator.
Holding the architecture and the chemical inputs fixed and varying only the
structural descriptor, we compare $(\egirth{e}, \mult{e})$ (together with a
bridge indicator, three channels per edge in total) against bounded
cycle-count dictionaries of increasing length. The three-channel descriptor is
matched only by a six-channel dictionary reaching length eight, and
dictionaries capped at four are worthless on this data: the advantage lies
less in what edge-girth sees than in not having to guess where to stop
looking. In the opposite direction, on \emph{edge-girth-regular} graphs (those in which every edge lies on the same number
of shortest cycles, all of the same length, \cite{jajcay2018edge}) the descriptor is constant by
definition, and any model built on it computes what it would compute from a
constant edge input: its expressive power falls back to the $1$-WL bound. The
prediction holds without a single exception on the BREC benchmark
\cite{wang2023empirical}, a purpose-built collection of graph pairs designed
to be hard for the Weisfeiler--Leman hierarchy. It does not, however, account
for every hard case in that benchmark, as we discuss in
Section~\ref{sec:discussion}.

\paragraph{Contributions.}
\begin{itemize}
  \item We propose the per-edge pair $(\egirth{e}, \mult{e})$ as a structural
        descriptor requiring no motif-size budget and computable exactly by
        one breadth-first search per edge, and \EGAGNN{}, a gated
        message-passing architecture that consumes it
        (Sections~\ref{sec:edgegirth}--\ref{sec:egagnn}).
  \item On \textsc{Zinc}-12k with all architectures matched to
        ${\approx}100$k parameters, \EGAGNN{} reaches $0.0932 \pm 0.0035$ test
        MAE, and a depth-matched comparator rules out nonlinear depth as the
        explanation. A descriptor-versus-descriptor study, holding
        architecture and chemical inputs fixed, quantifies what a bounded
        cycle dictionary must reach to match an unbounded one
        (Sections~\ref{subsec:zinc}--\ref{subsec:descriptors}).
  \item We prove that edge-girth-based models degenerate onto their backbone
        on edge-girth-regular graphs, and verify the prediction pair by pair:
        of the $400$ BREC pairs, $90$ are edge-girth-regular and not one is
        resolved. We further show that the multiplicity $\mult{e}$ is
        necessary for the statement -- an edge-girth-only version of it would
        be false (Section~\ref{sec:expressivity}).
\end{itemize}

\section{Edge-Girth as a Structural Edge Invariant}
\label{sec:edgegirth}
 
\subsection{Edge-Girth and Multiplicity, by Example}
\label{subsec:defs}
 
Let $G = (V, E)$ be a simple connected graph. The \emph{edge-girth} of an edge
$e \in E$, denoted $\egirth{e}$, is the minimum number of edges of a simple
cycle of $G$ containing $e$, and $\egirth{e} = \infty$ when $e$ lies on no
cycle, i.e.\ when $e$ is a bridge. Edge-girth values range over
$\{3, 4, 5, \dots\} \cup \{\infty\}$, and $\min_{e \in E} \egirth{e}$ recovers
the usual girth of $G$. We write $\mult{e}$ for the \emph{multiplicity} of
$e$, the number of distinct cycles of length $\egirth{e}$ through $e$. The
pair $(\egirth{e}, \mult{e})$ is the elementary object this work builds on: it
is defined per edge, invariant under graph isomorphism, and, unlike a
subgraph count, carries no bound on the size of the structure it reports.
 
Figure~\ref{fig:example} works through both quantities on a small graph.
 
%

\begin{figure}[t]
  \centering
  \begin{tikzpicture}[
      scale=0.95,
      vtx/.style   = {circle, draw=black, fill=black, inner sep=1.5pt},
      cyc/.style   = {fill=black!7, draw=none},
      ed/.style    = {black!70, line width=0.7pt},
      hl/.style    = {line width=1.6pt},
      mark/.style  = {circle, draw=black, fill=white, inner sep=0.9pt,
                      font=\tiny\bfseries, minimum size=3.4mm}]

    \coordinate (a) at (0, 2);
    \coordinate (b) at ( 2, 2);
    \coordinate (c) at ( 0, 0);
    \coordinate (d) at ( 2, 0);
    \coordinate (e) at ( 4, 2);
    \coordinate (f) at ( 4, 0);
    \coordinate (g) at ( -2, 2);

    \draw[ed] (a) -- (b) -- (c)-- (a);
    \draw[ed] (b) -- (c) -- (d) -- (b);
    \draw[ed] (b) -- (d) -- (f) -- (e) -- (b);

    \draw[hl, red!65!black]   (a) -- (b);
    \draw[hl, blue!60!black]  (b) -- (c);
    \draw[hl, green!45!black] (b) -- (d);
    \draw[hl, violet!45!black] (f) -- (e);
    \draw[hl, black!55, dashed, dash pattern=on 4pt off 2.5pt] (a) -- (g);

    \foreach \p in {a,b,c,d,e,f, g} { \node[vtx] at (\p) {}; }

    \node[text=black!45!black] at ( -1, 2.3) {a};
    \node[text=red!65!black]   at ( 1, 2.3) {b};
    \node[text=blue!60!black]  at (1, .7) {c};
    \node[text=green!45!black] at ( 2.3, 1) {d};
    \node[text=violet!45!black] at ( 4.3, 1) {e};
  \end{tikzpicture}

  \smallskip
  {\small
   \setlength{\tabcolsep}{5pt}
   \begin{tabular}{@{}lllll@{}}
  \textbf{a}: $\egirth{e} = \infty$ &
  \textbf{b}: $\egirth{e} = 3$, $\mult{e} = 1$ &
  \textbf{c}: $\egirth{e} = 3$, $\mult{e} = 2$ &
  \textbf{d}: $\egirth{e} = 3$, $\mult{e} = 1$ &
  \textbf{e}: $\egirth{e} = 4$, $\mult{e} = 1$
\end{tabular}}

  \caption{The descriptor on a small graph, illustrated on five edges.
    Edge~\textbf{a} is a bridge: it lies on no cycle, so its edge-girth is
    infinite. Edge~\textbf{b} lies on a single triangle, and edge~\textbf{c}
    on two, which is what its multiplicity of two records, the
    multiplicity counts how many shortest cycles an edge belongs to, not how
    many cycles in total. Edge~\textbf{d} is the instructive case: it lies on
    a triangle and also on the square formed with edges~\textbf{e} and its
    neighbours, and only the shorter of the two is reported, so its
    edge-girth is three rather than four. Edge~\textbf{e} belongs to no
    triangle at all, and its shortest cycle is that square, giving
    edge-girth four.}
  \label{fig:example}
\end{figure}
 
Collecting the edge-girth values over all edges yields the \emph{edge-girth sequence} $\egseq(G)$.
Which sequences arise as $\egseq(G)$ for some simple connected $G$ has recently been settled: a sequence is realizable if and only if it satisfies a recursive criterion on the number of edges attaining its largest value, controlled by the maximum diameter attainable by graphs realizing the sequence with that largest value removed \cite{marey2026realizability}. We use that characterisation only as
background; the two properties we need are stated next. A related but distinct
notion is that of \emph{edge-girth-regular} graphs \cite{jajcay2018edge}:
regular graphs where every edge lies on exactly $\lambda$ shortest cycles, all of length $g$, so that $(\egirth{e}, \mult{e})$ is constant across edges. The formal definition is given in Section~\ref{sec:expressivity}, which shows what this family delimits.
 
\subsection{The Edge-Girth Sequence Is a Weak Global Invariant}
\label{subsec:weakness}
 
Two properties show that $\egseq$ is poorly suited to acting as a graph
signature on its own.
 
First, $\egseq$ is \emph{additive under vertex identification}: gluing two
graphs $G_1$ and $G_2$ at a single vertex (denoted as $G_1 \oplus G_2$)
creates no new cycle, so the edge-girth values of $G_1 \oplus G_2$ are
exactly those of $G_1$ together with those of $G_2$ \cite{marey2026realizability}. The sequence is blind to
how components are attached to one another, and any two assemblies of the same
building blocks are indistinguishable by $\egseq$.
 
Second, $\egseq$ does not determine the order of the graph: the sequence 
consisting of nine edge-girth values all equal to $3$ is realized both by a 
graph on five vertices and by a graph on seven vertices 
\cite{marey2026realizability}. A quantity that fails to recover $|V(G)|$ 
cannot be expected to separate non-isomorphic graphs in general.
 
We therefore attach $(\egirth{e}, \mult{e})$ to each edge and inject it into
message passing rather than using $\egseq$ as a graph-level descriptor. This
buys nothing on the isomorphism benchmark of Section~\ref{subsec:brec}, where
propagating the descriptor through a network matches a direct hash of the
per-edge multiset to within one pair out of four hundred; it pays off on
graph-level regression (Section~\ref{subsec:zinc}), and
Section~\ref{sec:expressivity} explains why no architecture built on this
descriptor could have done better on the former.
 
\subsection{Computing $(\egirth{e}, \mult{e})$ by Breadth-First Search}
\label{subsec:complexity}
 
Both quantities come from a single traversal per edge. For $e = \{u,v\}$, a
breadth-first search from $u$ in $G \setminus \{e\}$ returns $d(u,v)$, the shortest path distance between $u$ and $v$ in $G \setminus \{e\}$, and by the standard shortest-path counting recursion, the number of
shortest $u$--$v$ paths.
Since such a path is simple and avoids $e$, closing it with $e$ gives $\egirth{e} = d(u,v) + 1$ and $\mult{e}$ exactly, with
$\egirth{e} = \infty$ when $v$ is unreachable. No approximation and no
truncation is involved.
 
The traversal is paid once per edge, so the descriptor costs
$O(|E|(|V| + |E|))$ per graph. This is a real cost, and we make no claim that
it is smaller than that of bounded motif counting: in our measurements the
ranking between the two reverses with the graph family. On molecular graphs,
which are small and comparatively dense, the per-edge traversal is the cheaper of the two; on sparse, low-degree graphs whose edges lie on long cycles, a
bounded cycle enumeration has few cycles to find and wins comfortably. 
What the traversal buys is exactness and the absence of a size ceiling, 
not speed. Unlike higher-order GNNs that elevate cycles to the message-passing
level \cite{bodnar2021weisfeiler}, edge-girth keeps the backbone standard 
while enriching edge features, trading expressivity for computational simplicity.
This complements recent analyses of cycle-counting expressiveness 
\cite{huang2022boosting}.
Computing $\egirth{e}$ is an instance of the replacement path problem, for
which sharper bounds are available \cite{bernstein2010nearly}; the
linear-time procedure of \cite{goedgebeur2025exhaustive} applies when the
girth is globally constant, which is not the regime considered here.
 
The computation is a one-off preprocessing step, cached and amortised over
training in the same way as random-walk and Laplacian encodings
\cite{dwivedi2021graph}. At molecular scale it costs a fraction of a single
training run and is reused across every seed and every hyperparameter setting,
so it does not enter the comparison between methods.
 
\section{\EGAGNN}
\label{sec:egagnn}

\subsection{Edge Feature Initialization}
\label{subsec:initfeat}
 
Each edge $e = \{u,v\}$ is endowed with the pair introduced in
Section~\ref{sec:edgegirth}. Two adjustments make it usable as a network
input. Bridges, for which $\egirth{e} = \infty$, cannot be handed to a network
as a finite number, so we replace the pair by a fixed placeholder value
together with an explicit binary indicator: the network is told directly that
the edge lies on no cycle, rather than being given an arbitrarily large finite
substitute that it might mistake for a very long cycle. Concretely, both
quantities are normalised using training-split statistics only, and a bridge
is represented by the placeholder value
$(\tilde g_{uv}, \tilde \lambda_{uv}) = (0, 0)$ together with the indicator
set to $1$. The initial edge representation is
\begin{equation}
  \be_{uv}^{(0)} \;=\;
  \Big[\, \tilde{g}_{uv}, \;\; \tilde{\lambda}_{uv}, \;\;
          \mathbf{1}\{\egirth{uv} = \infty\} \,\Big]
  \;\;\big\|\;\; \mathbf{a}_{uv},
  \label{eq:edgeinit}
\end{equation}
where $\tilde{g}$ and $\tilde{\lambda}$ are the normalised edge-girth and
multiplicity, and $\mathbf{a}_{uv}$ denotes any edge attributes supplied by
the dataset, such as bond types on molecular graphs. On the isomorphism
benchmark $\mathbf{a}_{uv}$ is empty and the structural triple is the only
edge input; on \textsc{Zinc} it is not, which is why
Section~\ref{subsec:descriptors} holds the architecture and $\mathbf{a}_{uv}$
fixed and varies only the structural part.
 
Both entries $\tilde{g}$ and $\tilde{\lambda}$ come from the single breadth-first
search of Section~\ref{subsec:complexity}. We deliberately stop at the
shortest cycle. Going further means one of two things. Counting simple cycles
of \emph{unbounded} prescribed length through an edge is
$\#\mathsf{W}[1]$-hard~\cite{flum2004parameterized}, whereas counting shortest ones is a linear-time
recursion. 
Counting them up to a fixed length $k$ is tractable, it is
exactly what a bounded cycle dictionary does. Section~\ref{subsec:descriptors}
measures what that choice costs relative to the unbounded descriptor.

\subsection{Gated Message Passing}
\label{subsec:gating}

Structural edge information is injected multiplicatively rather than by
concatenation to node features, so that an edge modulates what passes through
it. At layer $\ell$, the message sent from $v$ to $u$ is
\begin{equation}
  \bbm_{uv}^{(\ell)} \;=\;
  \sigm\!\Big( \phi_e^{(\ell)}\big( \be_{uv}^{(\ell-1)} \big) \Big)
  \;\odot\;
  \phi_n^{(\ell)}\big( \bh_{v}^{(\ell-1)} \big),
  \label{eq:message}
\end{equation}
where $\phi_e^{(\ell)}$ and $\phi_n^{(\ell)}$ are MLPs acting on the edge and
node states respectively, $\sigm$ is the sigmoid
and $\odot$ the Hadamard product. The sigmoid output acts as a learned
per-channel gate: an edge on a short cycle and an edge on a long one open
different channels, and the model learns which edge-girth regimes are worth
propagating for the task at hand.

Node and edge states are then updated by
\begin{align}
  \bh_{u}^{(\ell)}  &= \bh_{u}^{(\ell-1)}
     + \mathrm{AGG}\Big( \big\{ \bbm_{uv}^{(\ell)} \;\big|\;
       v \in \mathcal{N}(u) \big\} \Big),
     \label{eq:nodeupdate} \\[2pt]
  \be_{uv}^{(\ell)} &= \psi^{(\ell)}\Big( \be_{uv}^{(0)} \,\big\|\,
     \be_{uv}^{(\ell-1)} \,\big\|\, \bh_{u}^{(\ell)} \,\big\|\,
     \bh_{v}^{(\ell)} \Big),
     \label{eq:edgeupdate}
\end{align}
with $\psi^{(\ell)}$ an MLP and $\|$ concatenation.
Equation~\eqref{eq:nodeupdate} is a residual aggregation of the gated
messages. Equation~\eqref{eq:edgeupdate} is what makes the scheme more than a
static feature augmentation. Because an edge state absorbs the representations
of its two endpoints at every layer, and those endpoints have themselves
aggregated their neighbourhoods, after $\ell$ layers the gate on $\{u,v\}$
depends on the edge-girth values of every edge within $\ell$ hops, not only on
$\egirth{uv}$. Re-injecting $\be_{uv}^{(0)}$ at each layer keeps the raw
descriptor available and prevents it from being washed out by this diffusion.

Two consequences matter for Section~\ref{sec:expressivity}. First, the
architecture is a message-passing scheme: absent any edge input it inherits
the $1$-WL bound. Second, the \emph{only} information it receives beyond the
adjacency structure is $\be_{uv}^{(0)}$. Whenever that input carries no
discriminative signal, the model can do no better than its backbone.

\subsection{Graph-Level Readout}
\label{subsec:readout}

A permutation-invariant pooling of the final node states yields the graph
representation $\bg = \mathrm{READOUT}\big(\{\bh_u^{(L)} \mid u \in V\}\big)$,
passed to a regression head on \textsc{Zinc}; on the isomorphism benchmark the
graph embeddings themselves are compared under the protocol of
Section~\ref{subsec:brec}.

\section{Expressivity: What Edge-Girth Cannot Do}
\label{sec:expressivity}

The descriptor of Section~\ref{sec:edgegirth} is informative only inasmuch as
it varies across the edges of a graph. There is a family on which it does not
vary at all, and on which the model retains no advantage over its backbone.

Recall from \cite{jajcay2018edge} that a graph is
$\mathrm{egr}(n, k, g, \lambda)$, or \emph{edge-girth-regular}, if it is
$k$-regular on $n$ vertices, has girth $g$, and every edge lies on exactly
$\lambda$ cycles of length $g$. In such a graph the descriptor is constant by
construction, and this is enough to erase it.

\begin{proposition}
\label{prop:degeneracy}
Let $G_1$ and $G_2$ be edge-girth-regular graphs with the same parameters
$(g, \lambda)$ and no dataset edge attributes. Then \EGAGNN{} assigns them
distinct graph representations only if the $1$-WL test distinguishes them.
\end{proposition}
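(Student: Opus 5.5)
We argue by contraposition: assuming $1$-WL does not distinguish $G_1$ and $G_2$, we show that \EGAGNN{} assigns them equal representations. The first step is to observe that the initial edge input is the same vector on every edge of both graphs. Since both graphs have girth $g$ and every edge lies on exactly $\lambda$ cycles of length $g$, every edge has $\egirth{e}=g$ and $\mult{e}=\lambda$. Indeed, an edge on a $g$-cycle cannot lie on a shorter cycle, because $g$ is the global minimum. Neither graph has bridges, since each edge lies on a cycle, so the indicator in \eqref{eq:edgeinit} is $0$ throughout. The normalisation uses fixed training-split statistics, and $\mathbf{a}_{uv}$ is empty. Hence $\be^{(0)}_{uv}=c$ for one constant vector $c$, common to $G_1$ and $G_2$. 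The initial node states $\bh^{(0)}$ are likewise uniform, since no node attributes are present on this benchmark.

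The core step is an induction on $\ell$ that mirrors the usual proof that MPNNs are bounded by $1$-WL \cite{xu2018powerful,morris2019weisfeiler}, extended to carry the edge states. Let $\chi^{(\ell)}$ denote the $1$-WL colouring after $\ell$ rounds, computed on the disjoint union $G_1\sqcup G_2$ from a uniform start. The induction claim has two parts:
\begin{itemize}
\item[(i)] $\bh^{(\ell)}_u$ is a function of $\chi^{(\ell)}(u)$ alone;
\item[(ii)] $\be^{(\ell)}_{uv}$ is a function of the pair $(\chi^{(\ell)}(u),\chi^{(\ell)}(v))$ alone.
\end{itemize}
The base case holds because everything is constant. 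For the step, \eqref{eq:message} shows that $\bbm^{(\ell)}_{uv}$ depends only on $\be^{(\ell-1)}_{uv}$ and $\bh^{(\ell-1)}_v$. By (ii) and (i), it is therefore a function of $(\chi^{(\ell-1)}(u),\chi^{(\ell-1)}(v))$. The multiset $\{\bbm^{(\ell)}_{uv}: v\in\mathcal N(u)\}$ is thus determined by $\chi^{(\ell-1)}(u)$ together with the multiset of neighbour colours $\{\chi^{(\ell-1)}(v)\}$. That pair is exactly what $\chi^{(\ell)}(u)$ refines, so by \eqref{eq:nodeupdate} $\bh^{(\ell)}_u$ is a function of $\chi^{(\ell)}(u)$. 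For the edge state, \eqref{eq:edgeupdate} uses $\be^{(0)}_{uv}=c$, $\be^{(\ell-1)}_{uv}$, $\bh^{(\ell)}_u$ and $\bh^{(\ell)}_v$. Each of these is a function of $(\chi^{(\ell)}(u),\chi^{(\ell)}(v))$, because $\chi^{(\ell)}$ refines $\chi^{(\ell-1)}$. This closes the induction. The weights are shared across the two graphs, so the same functions apply to both.

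Finally, if $1$-WL does not distinguish $G_1$ and $G_2$, their multisets of stable colours coincide, and so do the multisets at every round $\ell\le L$. By (i), the multisets $\{\bh^{(L)}_u\}$ coincide. Since the readout is permutation-invariant, it returns the same $\bg$ for both graphs.

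The main obstacle, where I would be careful, is the edge update. It is not a standard MPNN step, and one must check that it introduces no information finer than pairs of endpoint colours. In particular, (ii) must be stated for ordered pairs, with the orientation of $\be_{uv}$ versus $\be_{vu}$ handled consistently: either the edge states are treated as directed, or $\psi$ is assumed symmetric. The crucial input is that $\be^{(0)}$ is constant. Without it, (ii) fails, since a nonconstant initial edge feature is extra information beyond $1$-WL. This is precisely where the $(g,\lambda)$-regularity is used, and it is why $\mult{e}$ must be included. If only $\egirth{e}$ were constant, the input $\tilde\lambda_{uv}$ could still vary from edge to edge and break the base case.
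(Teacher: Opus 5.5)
Your proof is correct and follows the paper's argument. The paper likewise uses $(g,\lambda)$-regularity to force a common constant edge input $\mathbf{c}$, runs a joint induction showing that node states are functions of $1$-WL colours and edge states are functions of endpoint colours, and concludes with a permutation-invariant readout. Your bookkeeping is slightly more careful than the paper's. Running $1$-WL on $G_1 \sqcup G_2$ makes ``function of the colour'' meaningful across both graphs. Stating (ii) for ordered pairs is what directed edge storage actually gives, whereas the paper's unordered $\{\!\{c_\ell(u),c_\ell(v)\}\!\}$ implicitly assumes $\psi^{(\ell)}$ is symmetric in its endpoint arguments; either version suffices for the message step.
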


\begin{proof}
Every edge of an edge-girth-regular graph satisfies $\egirth{e} = g$ and
$\mult{e} = \lambda$, and no edge is a bridge, so
Equation~\eqref{eq:edgeinit} gives $\be_{uv}^{(0)} = \mathbf{c}$ for every edge
of either graph, with the same constant $\mathbf{c}$ in both. We show by
induction on $\ell$ that $\bh_u^{(\ell)}$ is a function of the $1$-WL colour
$c_\ell(u)$ and that $\be_{uv}^{(\ell)}$ is a function of
$\{\!\{c_\ell(u), c_\ell(v)\}\!\}$. This holds at $\ell = 0$, node states being
initialised identically and $\be^{(0)}$ being constant. Assuming it at
$\ell - 1$, the message \eqref{eq:message} is a function of
$\big(\{\!\{c_{\ell-1}(u), c_{\ell-1}(v)\}\!\}, c_{\ell-1}(v)\big)$;
aggregating over $\mathcal{N}(u)$ in \eqref{eq:nodeupdate} yields a function of
$c_{\ell-1}(u)$ and of the multiset
$\{\!\{c_{\ell-1}(v)\}\!\}_{v \in \mathcal{N}(u)}$, which is exactly the
refinement defining $c_\ell(u)$; the edge update \eqref{eq:edgeupdate} then
depends only on $\mathbf{c}$ and on the colours of its endpoints. As
$\mathrm{READOUT}$ is a function of the multiset of final node states, two
graphs with identical $1$-WL colour multisets receive identical
representations.
\end{proof}

The gates do not stay constant across layers: from the first layer on, they
absorb node states through \eqref{eq:edgeupdate}. What the proof shows is simpler. The model computes exactly the function it would compute with \emph{any} constant edge input, so its expressive power is that of its
message-passing backbone, and the $1$-WL bound applies unchanged.

\begin{corollary}
\label{cor:families}
Strongly regular graphs with $\lambda > 0$, and among them those satisfying
the four-vertex condition, are edge-girth-regular: such a graph has girth $3$
with exactly $\lambda$ triangles through every edge. Distance-regular graphs
are likewise edge-girth-regular, the girth and the number of shortest cycles
through an edge being determined by the intersection array
\cite{brouwer2011distance,van2014distance}. Pairs drawn from either family
with matching parameters therefore fall under
Proposition~\ref{prop:degeneracy}.
\end{corollary}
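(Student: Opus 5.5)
The corollary has two parts: that the listed families are edge-girth-regular, and that pairs with matching parameters fall under Proposition~\ref{prop:degeneracy}. Membership in the egr family needs three things: regularity, constant girth $g$, and the same number $\lambda$ of $g$-cycles through every edge. Once these hold, the second part is immediate. If two graphs come from the same family with the same parameters (same $(n,k,\lambda,\mu)$, or same intersection array), they share $(g,\lambda)$, so Proposition~\ref{prop:degeneracy} applies verbatim.

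\textbf{Strongly regular graphs.} An $\mathrm{srg}(n,k,\lambda,\mu)$ is $k$-regular by definition, and any two adjacent vertices $u,v$ have exactly $\lambda$ common neighbours. A triangle through $\{u,v\}$ is the same thing as a common neighbour $w$. So if $\lambda>0$, every edge lies on a triangle, which forces $\egirth{e}=3$ for all $e$, and the number of triangles through $e$ is $\mult{e}=\lambda$. The four-vertex condition only adds constraints to this class, so its members are covered automatically and need no separate argument.

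\textbf{Distance-regular graphs.} Let the intersection array be $\{b_0,\dots;c_1,\dots\}$, with $a_i=k-b_i-c_i$. The standard fact I will use is that the girth is read off from the first index $d$ at which the local structure stops being tree-like. If $a_i=0$ and $c_i=1$ for all $i<d$, then the ball of radius $d-1$ around any vertex is a tree. In that situation:
\begin{itemize}
\item if $a_d>0$, then $g=2d+1$;
\item otherwise, if $c_d>1$, then $g=2d$.
\end{itemize}
The case $d=1$ with $a_1>0$ gives $g=3$ and $\lambda=a_1$, which is the strongly regular argument again.

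To count $g$-cycles through an edge $\{u,v\}$, I would use that geodesics of length below $d$ are unique. This lets me parametrise cycles by their "far" part.
\begin{itemize}
\item \emph{Odd case.} A $(2d+1)$-cycle through $uv$ is determined by an edge $xy$ with $d(u,x)=d-1$, $d(v,x)=d$, $d(v,y)=d-1$, $d(u,y)=d$. The count is then
\[
p^1_{d-1,d}\cdot(\text{number of such }y\text{ per }x),
\]
and both factors are functions of the array.
\item \emph{Even case.} The cycle is determined by a vertex $w$ with $d(v,w)=d-1$ and $d(u,w)=d$, together with one of the $c_d-1$ neighbours of $w$ at distance $d-1$ from $u$ other than the one on the $v$-geodesic. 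This gives the count
\[
p^1_{d-1,d}\,(c_d-1).
\]
\end{itemize}
In both cases the count depends only on intersection numbers, because the $p^1_{ij}$ are polynomial in the $b_i,c_i$. It is therefore the same for every edge.

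The main obstacle is this last bookkeeping step: making sure each shortest cycle is counted exactly once, and that the unique-geodesic property below the girth really gives a bijection. If the direct count becomes messy, I would fall back on a weaker argument. Distance-regular graphs are distance-regular around edges in the sense that the partition $\{(d(u,x),d(v,x))\}$ is equitable with parameters depending only on the array. The number of shortest $u$--$v$ paths in $G\setminus\{uv\}$ is computed by the breadth-first-search counting recursion of Section~\ref{subsec:complexity}, and on an equitable partition that recursion only ever uses these parameters. Hence both $\egirth{e}$ and $\mult{e}$ are edge-independent.
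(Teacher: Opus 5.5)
\textbf{The distance-regular part has two errors and an unsound fallback.} Your strongly regular case and the reduction to Proposition~\ref{prop:degeneracy} are fine and match the paper's inline justification. The paper does not derive the distance-regular case; it takes it from the cited references. Your own derivation breaks in two places.

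First, the girth rule has its cases in the wrong order. Take $d$ minimal with $(a_d,c_d)\neq(0,1)$. Any vertex at distance $d$ from $u$ with two neighbours at distance $d-1$ closes a $2d$-cycle with the two unique geodesics back to $u$. So $c_d>1$ forces $g=2d$ whatever $a_d$ is, and $g=2d+1$ holds only when $c_d=1$ and $a_d>0$. The Clebsch graph, with array $\{5,4;1,2\}$, has $a_2=3>0$ but girth $4$, not $5$.

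Second, your odd-case parametrisation describes the wrong cycles. An edge $xy$ with $d(u,x)=d(v,y)=d-1$ and $d(v,x)=d(u,y)=d$ closes with $uv$ into a cycle of length $(d-1)+1+(d-1)+1=2d$, so it is the even configuration again. When $c_d=1$ no such edge exists: $y$ would have two neighbours at distance $d-1$ from $u$, namely $x$ and its neighbour towards $v$. Your formula therefore returns $0$, whereas the Petersen graph has $4$ pentagons through every edge.

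\textbf{The correct odd count.} What lies opposite $uv$ on a $(2d+1)$-cycle is a vertex $w$ with $d(u,w)=d(v,w)=d$, not an edge. When $c_d=1$, the geodesics from $u$ and from $v$ to $w$ are unique. They are also internally disjoint: a shared vertex would lie at the same distance $i<d$ from both $u$ and $v$, giving an odd closed walk of length $2i+1\le 2d-1$ and hence a cycle shorter than $2d$. Conversely, both arcs of a shortest cycle through $uv$ are geodesics. So shortest cycles through $uv$ are in bijection with such $w$, and $\mult{e}=p^1_{d,d}=(k-1)^{d-1}a_d$. Your even count $p^1_{d-1,d}(c_d-1)=(k-1)^{d-1}(c_d-1)$ is correct.

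\textbf{The fallback fails.} It assumes that the partition by $(d(u,x),d(v,x))$ around an edge is equitable with parameters fixed by the array. That property is Nomura's $1$-homogeneity, and general distance-regular graphs do not have it. The Shrikhande graph and the $4\times 4$ rook's graph share the array $\{6,3;1,2\}$, yet the two common neighbours of an edge are non-adjacent in the first and adjacent in the second. The array forces only the cells inside the tree-like ball and the first non-trivial layer, which is exactly what the corrected direct count uses.
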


\begin{remark}[the multiplicity is necessary]
\label{rem:lambda}
Proposition~\ref{prop:degeneracy} requires the full pair
$(\egirth{e}, \mult{e})$ to be constant, and this is not a technical
convenience. Requiring only $\egirth{e}$ to be constant yields a strictly
larger family (on BREC, $103$ pairs instead of $90$), over which the
proposition would be \emph{false}: of the $13$ additional pairs, $12$ are
resolved by \EGAGNN{}, precisely because $\mult{e}$ varies on them while
$\egirth{e}$ does not. Carrying the multiplicity keeps the blind spot minimal.
\end{remark}

\begin{remark}[generality]
\label{rem:generality}
Nothing in the proof is specific to \EGAGNN{}. The argument applies to any
architecture whose sole structural input is a per-edge function of
$(\egirth{e}, \mult{e})$, and in particular to a plain MPNN augmented with
these values as edge features. The limitation is a property of the descriptor,
not of the way it is consumed. 
Escaping it requires strictly more information, cycle counts at lengths beyond the girth, for instance, which
reintroduces the trade-off of Section~\ref{subsec:initfeat}.
\end{remark}

Proposition~\ref{prop:degeneracy} is a prediction that can be read off a
benchmark before any model is trained, and it is falsifiable pair by pair:
every pair whose two graphs are edge-girth-regular with matching parameters
must go unresolved. Section~\ref{subsec:brec} tests it on all $400$ pairs of
BREC, and Section~\ref{sec:discussion} discusses how much of the benchmark's
difficulty it accounts for.

\section{Experiments}
\label{sec:experiments}

\subsection{Setup}
\label{subsec:setup}
 
The descriptor is computed from the graph alone, once, before any training,
and does not depend on the downstream objective, it is a property of the
representation rather than of the task. We therefore evaluate it on two tasks
of different kinds, which probe it in complementary ways. \textsc{Zinc}-12k
\cite{dwivedi2023benchmarking} measures whether the descriptor carries signal
a supervised model can exploit, and lets us compare it against alternative
structural descriptors under a fixed architecture. BREC
\cite{wang2023empirical}, a benchmark of graph pairs constructed to be
indistinguishable by $1$-WL and difficult for higher levels of the
Weisfeiler--Leman hierarchy, measures separating power directly, with no
objective to fit, and is where the limitation of
Section~\ref{sec:expressivity} can be tested pair by pair. Results on CSL are
relegated to Appendix~\ref{app:csl}, where two of the reference
implementations proved unreliable.
 
On \textsc{Zinc} we use the official splits ($10{,}000/1{,}000/1{,}000$), train
for $200$ epochs and report test MAE averaged over four seeds, with no early
stopping and no test-set model selection. All architectures are matched to a
budget of $100{,}000$ parameters within $\pm 10\%$ by adjusting the hidden
width, so that comparisons are not confounded by capacity. On BREC we use the
official reliable-paired-comparison protocol with $32$ relabelings, on all
$400$ pairs; every method attains a $100\%$ reliability rate, i.e.\ no method
is credited with distinguishing a pair it cannot reliably separate under
permutation. All experiments run on CPU. Full hyperparameters are given in
Appendix~\ref{app:details}.
 
\paragraph{Calibration.}
Three independent checks place our pipeline against published numbers. The $2$-dimensional Folklore Weisfeiler--Leman colour refinement (Exact
$2$-FWL), provably equivalent in distinguishing power to $3$-WL, reaches $0.675$
overall on BREC, matching the published figure for $3$-WL on this benchmark;
our implementation of PPGN \cite{maron2019provably}, a message-passing
architecture proven to match $2$-FWL's power, resolves $50$ of the $140$ pairs
that \cite{wang2023empirical} report as a single aggregated
\textsc{Regular} bucket, i.e.\ $35.7\%$, exactly the published value once the
same aggregation is applied; and on \textsc{Zinc} our GatedGCN baseline
reaches $0.278$ against a published $0.375 \pm 0.003$ at a comparable budget
\cite{dwivedi2023benchmarking}. The baselines we compare against are at least
as strong as their reference implementations, not weaker.

CW Networks \cite{bodnar2021weisfeiler} and I$^2$-GNN
\cite{huang2022boosting} were left out of Table~\ref{tab:zinc} for
implementation cost rather than principle; GSN remains the closest
descriptor-augmented comparator we report.

\subsection{Graph Regression on \textsc{Zinc}}
\label{subsec:zinc}

\begin{table}[t]
  \caption{\textsc{Zinc}-12k test MAE (lower is better), mean $\pm$ std over
           four seeds, all architectures matched to $\approx\!100$k
           parameters. GatedGCN-MLP replaces each linear transform of GatedGCN
           by the same two-layer MLP \EGAGNN{} uses, matching nonlinear depth
           per layer. GSN here is a GIN backbone with per-node orbit counts up
           to length six and bond-type edge features; it differs from
           \EGAGNN{} in architecture as well as descriptor and is reported as
           an external baseline, not as a controlled comparison.
           Section~\ref{subsec:descriptors} provides the latter.}
  \label{tab:zinc}
  \centering
  \small
  \begin{tabular}{lccc}
    \toprule
    Method & Hidden dim. & Params & Test MAE \\
    \midrule
    \EGAGNN{}       & 54  & 104{,}113 & $\mathbf{0.0932 \pm 0.0035}$ \\
    GSN ($k=6$)     & 98  & 102{,}313 & $0.1725 \pm 0.0129$ \\
    GatedGCN-MLP    & 54  & 102{,}601 & $0.2726 \pm 0.0151$ \\
    GatedGCN        & 74  & 102{,}121 & $0.2778 \pm 0.0072$ \\
    GIN             & 98  & \phantom{0}99{,}961 & $0.3152 \pm 0.0108$ \\
    GCN             & 130 & 106{,}081 & $0.4372 \pm 0.0096$ \\
    \bottomrule
  \end{tabular}
\end{table}

Table~\ref{tab:zinc} reports \textsc{Zinc} results at matched capacity.
GCN, GIN, GatedGCN and GatedGCN-MLP do not consume bond types in our
implementation, whereas \EGAGNN{} and GSN do (Appendix~\ref{app:asymmetries}). The comparison to those four methods therefore reflects architecture,
descriptor, and edge-attribute access jointly, and
Section~\ref{subsec:descriptors} is the controlled comparison. \EGAGNN{}
reaches $0.0932 \pm 0.0035$, a factor $3.0$ below GatedGCN, the closest
architectural comparator, and a factor $1.9$ below GSN with a cycle
dictionary sized for molecular rings. For reference, the published GSN
figures are $0.115 \pm 0.012$ at a comparable budget and
$0.101 \pm 0.010$ with roughly five times more parameters
\cite{bouritsas2022improving}.

\EGAGNN{}'s per-layer transforms are two-layer MLPs whereas GatedGCN's are
single linear maps, so matched parameter counts do not by themselves match
nonlinear depth. We therefore built GatedGCN-MLP, identical to GatedGCN except
that each of its four linear transforms is replaced by the same two-layer MLP
used in \EGAGNN{}, holding the gating mechanism, aggregation and skip
connection fixed. It moves the MAE by $0.005$, within the combined standard
deviation: nonlinear depth is not the explanation.

\subsection{Which Structural Descriptor?}
\label{subsec:descriptors}

Table~\ref{tab:zinc} varies architecture and descriptor together, so it cannot
attribute the improvement to either. We therefore hold the architecture and
the bond attributes fixed -- every variant below is \EGAGNN{} with the same
capacity and the same chemistry -- and vary only the structural entries of
$\be_{uv}^{(0)}$. Alongside $(\egirth{e}, \mult{e})$ we consider per-edge
counts of simple cycles of each length up to a ceiling $k$, the natural
bounded-dictionary counterpart to an unbounded descriptor. These counts are
computed per edge and injected identically; they are not GSN, whose per-node
orbit counts and backbone differ, though the bounded enumeration mechanism is
the same idea \cite{bouritsas2022improving}.

\begin{table}[t]
  \caption{\textsc{Zinc}, architecture and bond attributes held fixed, only
           the structural edge descriptor varied. ``Dim'' is the number of
           structural channels supplied per edge. Mean $\pm$ std over four
           seeds at a matched parameter budget.}
  \label{tab:descriptors}
  \centering
  \small
  \begin{tabular}{lcc}
    \toprule
    Structural descriptor & Dim & Test MAE \\
    \midrule
    none (constant)                   & 0 & $0.2044 \pm 0.0030$ \\
    triangle count                    & 1 & $0.2154 \pm 0.0150$ \\
    cycle counts, lengths $3$--$4$    & 2 & $0.2021 \pm 0.0117$ \\
    cycle counts, lengths $3$--$6$    & 4 & $0.1150 \pm 0.0124$ \\
    cycle counts, lengths $3$--$8$    & 6 & $0.1005 \pm 0.0090$ \\
    $(\egirth{e}, \mult{e})$ + bridge & 3 & $\mathbf{0.0932 \pm 0.0035}$ \\
    \bottomrule
  \end{tabular}
\end{table}

Table~\ref{tab:descriptors} makes the cost of the ceiling explicit. A
dictionary capped at length four is worth nothing on this data: at $0.2021$ it
is indistinguishable from supplying no structural descriptor at all, and
counting triangles alone is no better. The picture changes once the ceiling
clears the ring sizes that occur in drug-like molecules, with length six
reaching $0.1150$ and length eight $0.1005$. Edge-girth reaches $0.0932$ with
three channels rather than six, and without the ceiling ever being chosen.

The gap between the best bounded dictionary and edge-girth is small, and we
claim no more from it than it supports: on molecular graphs, a cycle
dictionary sized to $k=8$ comes close. What the comparison shows is the shape
of the trade-off. The useful ceiling is not knowable in advance, as $k=4$ is
worthless here and $k=6$ leaves a fifth of the gain on the table, and it is
dataset-specific, whereas the unbounded descriptor requires no such choice and
reports the informative cycle whatever its length.


\subsection{Isomorphism Discrimination on BREC}
\label{subsec:brec}

\begin{table}[t]
  \caption{BREC, all $400$ pairs, official RPC protocol. Fraction of pairs
           distinguished; all methods attain $100\%$ reliability. No method
           receives edge attributes here, so this is a comparison of
           structural descriptors under matched conditions.
           $\dagger$ marks methods whose only structural input is a per-edge
           function of $(\egirth{e}, \mult{e})$, and which
           Proposition~\ref{prop:degeneracy} therefore constrains.}
  \label{tab:brec}
  \centering
  \small
  \begin{tabular}{lccccccc}
    \toprule
    Method & Overall & Basic & Regular & Extension & CFI & 4-Vert. & Dist.-Reg. \\
           & (400)   & (60)  & (100)   & (100)     & (100) & (20)  & (20) \\
    \midrule
    Exact $2$-FWL                 & 0.675 & 1.000 & 0.500 & 1.000 & 0.600 & 0.000 & 0.000 \\
    PPGN                          & 0.518 & 1.000 & 0.500 & 0.970 & 0.000 & 0.000 & 0.000 \\
    \EGAGNN{}$^\dagger$           & 0.485 & 1.000 & 0.490 & 0.820 & 0.030 & 0.000 & 0.000 \\
    Edge-girth multiset$^\dagger$ & 0.483 & 1.000 & 0.490 & 0.810 & 0.030 & 0.000 & 0.000 \\
    GSN                           & 0.443 & 0.950 & 0.490 & 0.710 & 0.000 & 0.000 & 0.000 \\
    Triangle counts               & 0.398 & 0.983 & 0.480 & 0.520 & 0.000 & 0.000 & 0.000 \\
    GCN / GIN / GatedGCN          & 0.000 & 0.000 & 0.000 & 0.000 & 0.000 & 0.000 & 0.000 \\
    \bottomrule
  \end{tabular}
\end{table}

On BREC no method receives edge attributes, so Table~\ref{tab:brec} compares
structural descriptors on equal terms. Edge-girth separates every
\textsc{Basic} pair and $82\%$ of \textsc{Extension}, placing \EGAGNN{} ahead
of GSN and of triangle counting overall, though below exact $2$-FWL and PPGN.
The margin over GSN is clearest on \textsc{Extension}, $0.82$ against $0.71$.

One observation runs against the design of the model. \EGAGNN{} and a direct
hash of the per-edge multiset $\{\!\{(\egirth{e}, \mult{e})\}\!\}$ differ by a
single pair out of four hundred. Propagating the descriptor through a network
buys nothing on this task: the feature, not the architecture, does all the
work. This contrasts with \textsc{Zinc}, where the architecture accounts for a
substantial share of the improvement, and
Section~\ref{sec:expressivity} explains why no architecture built on this
descriptor could have done better here.

\paragraph{The predicted blind spot, pair by pair.}
Aggregate rates understate the point, since
Proposition~\ref{prop:degeneracy} constrains individual pairs. Of the $400$
BREC pairs, $90$ have both graphs edge-girth-regular with matching parameters
$(k, g, \lambda)$; these include the whole of the distance-regular and
four-vertex-condition categories, all $40$ of whose graphs we verify to be
edge-girth-regular. Table~\ref{tab:contingency} crosses this against the
outcome.

\begin{table}[t]
  \caption{Pairs resolved versus edge-girth-regularity, all $400$ BREC pairs.
           The upper-left cell is the one Proposition~\ref{prop:degeneracy}
           constrains; it is empty for both methods.}
  \label{tab:contingency}
  \centering
  \small
  \begin{tabular}{lcccc}
    \toprule
    & \multicolumn{2}{c}{\EGAGNN{}} & \multicolumn{2}{c}{Edge-girth multiset} \\
    \cmidrule(lr){2-3}\cmidrule(lr){4-5}
    & egr & not egr & egr & not egr \\
    \midrule
    resolved     & \textbf{0} & 194 & \textbf{0} & 193 \\
    not resolved & 90 & 116 & 90 & 117 \\
    \bottomrule
  \end{tabular}
\end{table}

Not one edge-girth-regular pair is resolved, by either method, anywhere in the
benchmark. The prediction holds without exception at the level of individual
pairs, not merely in aggregate.

The converse cell is populated as expected ($116$ pairs go unresolved although the
descriptor does vary on them), since the proposition is a
one-directional guarantee and not a characterisation of success. We return to
what it does and does not explain in Section~\ref{sec:discussion}.

\section{Discussion and Limitations}
\label{sec:discussion}
 
\paragraph{What Proposition~\ref{prop:degeneracy} does and does not explain.}
Two qualifications keep the BREC result from being over-read. Within the
\textsc{Regular} category, the $50$ edge-girth-regular pairs coincide
\emph{exactly} with the strongly regular ones, no pair separates the two
properties (Table~\ref{tab:regular-strat}). Edge-girth-based methods resolve
$49$ of the $50$ non-egr pairs and none of the $50$ egr ones, while exact
$2$-FWL resolves all $50$ non-egr pairs and likewise none of the egr ones: the
two coincide except for a single pair, which is neither edge-girth-regular nor
strongly regular and is resolved by $2$-FWL alone. That pair, \#71, is the
only instance in the entire category where our prediction and generic $3$-WL
hardness come apart. Second, the CFI pairs fall outside the proposition's
scope entirely: not one of the $100$ has both graphs edge-girth-regular with
matching parameters, so the descriptor does vary on them, and yet the
edge-girth methods resolve $3\%$ of them against $60\%$ for exact $2$-FWL
(Table~\ref{tab:brec}). Why an informative descriptor fails so completely
there is an empirical finding our analysis does not account for, and we
report it as such rather than stretch the theory to cover it. The proposition
is a one-directional guarantee, edge-girth-regularity implies failure and not a characterisation of success.
 
\paragraph{How much the unbounded descriptor is worth.}
The margin between edge-girth and the best bounded dictionary we tried is
small. A per-edge cycle count reaching length eight is within $8\%$ of
$(\egirth{e}, \mult{e})$ on \textsc{Zinc}, and the two standard deviations
nearly overlap. 
On this dataset the practical benefit of removing the ceiling
is modest once it is set correctly, the harder problem is that it cannot be
set correctly in advance (Section~\ref{subsec:descriptors}).
Whether a family of
graphs exists on which no affordable ceiling suffices is not settled by our
experiments, and constructing a benchmark that isolates it proved hard.
 
\paragraph{How far the supervised evidence reaches.}
The descriptor is computed independently of the objective, so nothing in its
construction is specific to regression; but our supervised evidence comes from
a single regression target on a single dataset, and we do not claim more.
Whether the gains reported here transfer to graph classification, or to
molecular targets whose dependence on cycle structure differs from
\textsc{Zinc}'s, is untested. We have made that single point of evidence as
solid as we could: baselines matched in parameter count and in nonlinear
depth, calibrated against published numbers, official splits, no test-set
model selection, and a descriptor study at full scale over four seeds. We also
ruled out the most natural shortcut explanation, that the cyclic term of
\textsc{Zinc}'s target is being read off the descriptor, by regressing the
target on the edge-girth histogram alone, which explains only
$R^2 \approx 0.26$ of its variance, far too little to account for the gap.
What remains is a mechanism we can bound but not identify: the gated
architecture alone reaches $0.204$ (Table~\ref{tab:descriptors}), the
descriptor contributes the rest, and why the interaction of per-edge cycle
structure with message passing is worth that much is open. 
 
Two comparisons in Table~\ref{tab:zinc} remain confounded and should not be
read as descriptor comparisons. GSN differs from \EGAGNN{} in backbone as well
as in descriptor, so its gap cannot be attributed to either;
Section~\ref{subsec:descriptors} is the controlled version of that comparison.
And GCN, GIN and GatedGCN do not consume \textsc{Zinc}'s bond types in our
implementation, so they stand as architecture baselines rather than as
chemistry-aware ones.
 
\paragraph{Directions.}
The most immediate follow-up is to widen the range of downstream tasks. Since
the descriptor is task-agnostic by construction, graph classification with a
matched architecture, and molecular targets whose dependence on cycle
structure differs from \textsc{Zinc}'s, would establish whether the gains
observed here are a property of the representation or of this particular
objective. Second, the regime the descriptor is designed for (graphs whose
informative cycles are long) remains untested; Tanner graphs of LDPC codes
are a natural candidate, since decoding performance degrades with short cycles
through an edge \cite{xu2025ldpc}, which is exactly what $\egirth{e}$
reports. Third, combining edge-based constraints with vertex-based ones such
as the degree sequence would tighten the prescribed local structure and may
leave the edge-girth-regular regime altogether. We note finally that building
a synthetic probe to isolate cycle scale is harder than it appears: in our
attempts, a target simple enough to control was either predictable from
adjacency alone, without any cycle-length information, or an invertible
function of the descriptor.

\printbibliography

@article{maron2019provably,
  title={Provably powerful graph networks},
  author={Maron, Haggai and Ben-Hamu, Heli and Serviansky, Hadar and Lipman, Yaron},
  journal={Advances in neural information processing systems},
  volume={32},
  year={2019}
}

@article{xu2018powerful,
  title={How powerful are graph neural networks?},
  author={Xu, Keyulu and Hu, Weihua and Leskovec, Jure and Jegelka, Stefanie},
  journal={arXiv preprint arXiv:1810.00826},
  year={2018}
}

@inproceedings{morris2019weisfeiler,
  title={Weisfeiler and leman go neural: Higher-order graph neural networks},
  author={Morris, Christopher and Ritzert, Martin and Fey, Matthias and Hamilton, William L and Lenssen, Jan Eric and Rattan, Gaurav and Grohe, Martin},
  booktitle={Proceedings of the AAAI conference on artificial intelligence},
  volume={33},
  pages={4602--4609},
  year={2019}
}

@article{bouritsas2022improving,
  title={Improving graph neural network expressivity via subgraph isomorphism counting},
  author={Bouritsas, Giorgos and Frasca, Fabrizio and Zafeiriou, Stefanos and Bronstein, Michael M},
  journal={IEEE Transactions on Pattern Analysis and Machine Intelligence},
  volume={45},
  number={1},
  pages={657--668},
  year={2022},
  publisher={IEEE}
}

@article{dwivedi2021graph,
  title={Graph neural networks with learnable structural and positional representations},
  author={Dwivedi, Vijay Prakash and Luu, Anh Tuan and Laurent, Thomas and Bengio, Yoshua and Bresson, Xavier},
  journal={arXiv preprint arXiv:2110.07875},
  year={2021}
}

@article{marey2026realizability,
  title={On the Realizability of Edge-Girth Sequences},
  author={Marey, Lilian and Hilaire, Paul and Laclau, Charlotte},
  journal={arXiv preprint arXiv:2607.25629},
  year={2026}
}

@article{jajcay2018edge,
  title={Edge-girth-regular graphs},
  author={Jajcay, R. and  Kiss, G. and Miklavic, S},
  journal={European Journal of Combinatorics},
  volume={72},
  pages={70--82},
  year={2018},
  publisher={Elsevier}
}

@article{goedgebeur2025exhaustive,
  title={Exhaustive generation of edge-girth-regular graphs},
  author={Goedgebeur, Jan and Jooken, Jorik},
  journal={Experimental Mathematics},
  pages={1--13},
  year={2025},
  publisher={Taylor \& Francis}
}

@incollection{brouwer2011distance,
  title={Distance-regular graphs},
  author={Brouwer, Andries E and Haemers, Willem H},
  booktitle={Spectra of graphs},
  pages={177--185},
  year={2011},
  publisher={Springer}
}

@article{van2014distance,
  title={Distance-regular graphs},
  author={Van Dam, Edwin R and Koolen, Jack H and Tanaka, Hajime},
  journal={arXiv preprint arXiv:1410.6294},
  year={2014}
}

@inproceedings{bernstein2010nearly,
  title={A nearly optimal algorithm for approximating replacement paths and k shortest simple paths in general graphs},
  author={Bernstein, Aaron},
  booktitle={Proceedings of the twenty-first annual ACM-SIAM symposium on Discrete Algorithms},
  pages={742--755},
  year={2010},
  organization={SIAM}
}

@article{woodhouse2016stochastic,
  title={Stochastic cycle selection in active flow networks},
  author={Woodhouse, Francis G and Forrow, Aden and Fawcett, Joanna B and Dunkel, J{\"o}rn},
  journal={Proceedings of the national academy of sciences},
  volume={113},
  number={29},
  pages={8200--8205},
  year={2016},
  publisher={National Academy of Sciences}
}

@article{xu2025ldpc,
  title={LDPC Codes on Balanced Incomplete Block Designs: Construction, Girth, and Cycle Structure Analysis},
  author={Xu, Hengzhou and Zhang, Xiaodong and Xu, Mengmeng and Yu, Haipeng and Zhu, Hai},
  journal={Entropy},
  volume={27},
  number={5},
  pages={476},
  year={2025},
  publisher={MDPI}
}

@article{wang2023empirical,
  title={An empirical study of realized gnn expressiveness},
  author={Wang, Yanbo and Zhang, Muhan},
  journal={arXiv preprint arXiv:2304.07702},
  year={2023}
}

@article{dwivedi2023benchmarking,
  title={Benchmarking graph neural networks},
  author={Dwivedi, Vijay Prakash and Joshi, Chaitanya K and Luu, Anh Tuan and Laurent, Thomas and Bengio, Yoshua and Bresson, Xavier},
  journal={Journal of Machine Learning Research},
  volume={24},
  number={43},
  pages={1--48},
  year={2023}
}

@article{flum2004parameterized,
  title={The parameterized complexity of counting problems},
  author={Flum, J{\"o}rg and Grohe, Martin},
  journal={SIAM Journal on Computing},
  volume={33},
  number={4},
  pages={892--922},
  year={2004},
  publisher={SIAM}
}

@article{bodnar2021weisfeiler,
  title={Weisfeiler and lehman go cellular: Cw networks},
  author={Bodnar, Cristian and Frasca, Fabrizio and Otter, Nina and Wang, Yuguang and Lio, Pietro and Montufar, Guido F and Bronstein, Michael},
  journal={Advances in neural information processing systems},
  volume={34},
  pages={2625--2640},
  year={2021}
}

@article{huang2022boosting,
  title={Boosting the cycle counting power of graph neural networks with I2-GNNs},
  author={Huang, Yinan and Peng, Xingang and Ma, Jianzhu and Zhang, Muhan},
  journal={arXiv preprint arXiv:2210.13978},
  year={2022}
}


\appendix
 
\section{Experimental Details}
\label{app:details}
 
\subsection{Hyperparameters}
 
\begin{table}[h]
  \caption{Training configuration. Values are shared across methods unless
           stated otherwise in Section~\ref{app:asymmetries}.}
  \label{tab:hyperparams}
  \centering
  \small
  \begin{tabular}{lccc}
    \toprule
    & \textsc{Zinc} & Descriptor study & BREC \\
    \midrule
    Optimizer            & Adam & Adam & Adam \\
    Learning rate        & $10^{-3}$ & $10^{-3}$ & $10^{-4}$ \\
    Weight decay         & $0$ & $0$ & $10^{-4}$ \\
    Batch size (train)   & 32 & 32 & 16 \\
    Batch size (eval)    & 64 & 64 & --- \\
    Scheduler            & none & none & \texttt{ReduceLROnPlateau} \\
    Propagation layers   & 4 & 4 & 3 \\
    Hidden dim.          & matched & matched & 32 \\
    Output dim.          & = hidden & = hidden & 16 \\
    Aggregation          & sum & sum & sum \\
    Readout              & sum & sum & sum \\
    Loss                 & $L_1$ & $L_1$ & cosine embedding \\
    Epochs               & 200 & 200 & 20 per pair \\
    Seeds                & 13--16 & 13--16 & 13 \\
    \bottomrule
  \end{tabular}
\end{table}
 
The BREC scheduler is \texttt{ReduceLROnPlateau} in its default PyTorch
configuration ($\mathrm{mode}=\mathrm{min}$, factor $0.1$, patience $10$), and
the cosine embedding loss uses margin $0$. BREC is run at a single seed; the
$32$ relabelings of the RPC protocol supply the stochasticity the test
requires, so per-seed variance is not the quantity of interest there. On BREC
and CSL, where the graphs carry no attributes, node features are initialised
to a constant vector of ones.
 
\subsection{Descriptor construction}
 
Bridges are represented internally as $(\egirth{e}, \mult{e}) = (\infty, 0)$
and injected as the triple $(0, 0, 1)$: the two structural channels are zeroed
and the indicator is set, so the network never receives a numerical stand-in
for infinity. Finite values are standardised (mean and standard deviation, the
latter floored to $1$ when degenerate) over the first $300$ graphs of the
training split, pooled over all their edges. On CSL, which has only $150$
graphs in total, the normaliser is fitted on the whole dataset. BREC has no
train/test split, so the $300$ graphs used there are the first entries of
whichever subset a given invocation preprocessed; since the normalisation is
applied identically to both graphs of every pair and the RPC test is
scale-free within a pair, this does not affect the reported outcomes.
 
\texttt{edge\_index} stores both directions of every undirected edge. The
descriptor is looked up by an unordered key, so the two directed copies carry
identical values by construction; each undirected edge consequently
contributes twice to the neighbourhood aggregation of Equation~\eqref{eq:nodeupdate},
which is a constant factor under sum aggregation.
 
\subsection{Parameter-budget matching}
\label{app:budget}
 
All \textsc{Zinc} architectures are matched to $100{,}000$ parameters within
$\pm 10\%$ by binary search on the hidden width over $[4, 1024]$, the
parameter count being monotone in that width for every architecture used. The
search is run independently for each method and each descriptor variant, so a
variant supplying more structural channels, which enlarges the edge
embedding and the edge-update MLP, is compensated by a smaller hidden
width.
 
\begin{table}[h]
  \caption{Effective hidden width and parameter count. Target $100{,}000$,
           tolerance $\pm 10\%$.}
  \label{tab:budget}
  \centering
  \small
  \begin{tabular}{lcc|lcc}
    \toprule
    Method & Hidden & Params & Descriptor variant & Hidden & Params \\
    \midrule
    \EGAGNN{}    & 54  & 104{,}113 & none (constant)          & 54 & 104{,}113 \\
    GCN          & 130 & 106{,}081 & triangle count           & 54 & 103{,}573 \\
    GIN          & 98  & \phantom{0}99{,}961 & cycles $3$--$4$ & 54 & 103{,}843 \\
    GatedGCN     & 74  & 102{,}121 & cycles $3$--$6$          & 54 & 104{,}383 \\
    GatedGCN-MLP & 54  & 102{,}601 & cycles $3$--$8$          & 50 & \phantom{0}90{,}351 \\
    GSN ($k=6$)  & 98  & 102{,}313 & $(\egirth{e},\mult{e})$  & 54 & 104{,}113 \\
    \bottomrule
  \end{tabular}
\end{table}
 
One entry deserves comment. The cycles-$3$--$8$ variant lands at $90{,}351$
parameters, $9.6\%$ below target and $13\%$ below the edge-girth variant it is
compared against: the binary search overshot downwards when compensating for
its six structural channels. That variant is therefore the one comparison in
Table~\ref{tab:descriptors} run at a mild disadvantage, and its reported MAE
of $0.1005$ should be read as an upper bound on what a length-eight dictionary
achieves at full budget. This does not affect the conclusions drawn in
Section~\ref{subsec:descriptors}, which concern the dictionaries capped at
four and six.
 
\subsection{BREC protocol}
 
We use $32$ relabelings per graph, $20$ training epochs per pair, an embedding
dimension of $16$, a Hotelling $T^2$ threshold of $72.34$ and a loss-based
early stop at $0.2$, following the reference implementation. The reliability
control group uses the same number of relabelings and the same threshold as
the main test; a pair is credited only when the main statistic exceeds the
threshold and the control statistic does not.
 
The benchmark's $400$ pairs are partitioned into \textsc{Basic} ($60$),
\textsc{Regular} ($100$), \textsc{Extension} ($100$), \textsc{CFI} ($100$),
\textsc{4-Vertex-Condition} ($20$) and \textsc{Distance-Regular} ($20$). The
official release reports a single \textsc{Regular} bucket of $140$ pairs; our
last two categories (\textsc{4-Vertex-Condition} and \textsc{Distance-Regular})
partition exactly that bucket, so the totals agree and only the reporting
granularity differs.
 
Two of the evaluated methods are not learned. Exact $2$-FWL compares the two
graphs' stable pair-colourings directly. The edge-girth multiset baseline maps
each distinct $(\egirth{e}, \mult{e})$ value to a fixed pseudo-random vector
and sums over edges, giving one deterministic embedding per graph and no
parameters. Both skip the training loop entirely and are then passed through
the same $T^2$ test at the same threshold as the learned methods, so no
separate similarity proxy is used anywhere in Table~\ref{tab:brec}.
 
\subsection{Asymmetries between methods}
\label{app:asymmetries}
 
Three departures from the shared configuration should be recorded. On
\textsc{Zinc}, GCN, GIN, GatedGCN and GatedGCN-MLP do not consume bond types
in our implementation, whereas \EGAGNN{} and GSN do; this is discussed in
Section~\ref{sec:discussion}. On BREC, PPGN is built with half the hidden
width ($16$) and two layers instead of three, following its reference
configuration for memory reasons. Also on BREC, the triangle-count baseline
supplies a single structural channel rather than the shared edge-input
dimension.
 
\subsection{Hardware and software}
 
All experiments run on CPU: an $11$-core Apple M3 Pro with $18$\,GB of RAM,
with ten worker processes. Total wall-clock time was $6.5$\,h for the full
BREC campaign across nine methods, $67$\,min for the ablation and $57$\,min
for the descriptor study, each over four seeds.
 
\section{Full Result Tables}
\label{app:tables}
 
\begin{table}[h]
  \caption{BREC, all $400$ pairs, all evaluated methods. Every method attains
           a $100\%$ reliability rate ($0$ control-group failures out of
           $400$).}
  \label{tab:brec-full}
  \centering
  \small
  \begin{tabular}{lccccccc}
    \toprule
    Method & Basic & Regular & Extension & CFI & 4-Vert. & Dist.-Reg. & Overall \\
           & (60)  & (100)   & (100)     & (100) & (20)  & (20)       & (400) \\
    \midrule
    Exact $2$-FWL       & 1.000 & 0.500 & 1.000 & 0.600 & 0.000 & 0.000 & 0.675 \\
    PPGN                & 1.000 & 0.500 & 0.970 & 0.000 & 0.000 & 0.000 & 0.518 \\
    \EGAGNN{}           & 1.000 & 0.490 & 0.820 & 0.030 & 0.000 & 0.000 & 0.485 \\
    Edge-girth multiset & 1.000 & 0.490 & 0.810 & 0.030 & 0.000 & 0.000 & 0.483 \\
    GSN                 & 0.950 & 0.490 & 0.710 & 0.000 & 0.000 & 0.000 & 0.443 \\
    Triangle counts     & 0.983 & 0.480 & 0.520 & 0.000 & 0.000 & 0.000 & 0.398 \\
    GCN                 & 0.000 & 0.000 & 0.000 & 0.000 & 0.000 & 0.000 & 0.000 \\
    GIN                 & 0.000 & 0.000 & 0.000 & 0.000 & 0.000 & 0.000 & 0.000 \\
    GatedGCN            & 0.000 & 0.000 & 0.000 & 0.000 & 0.000 & 0.000 & 0.000 \\
    \bottomrule
  \end{tabular}
\end{table}
 
\begin{table}[h]
  \caption{\textsc{Zinc} test MAE per seed. GSN rows without bond types are
           reported for reference; the main text uses the bond-aware version.}
  \label{tab:zinc-seeds}
  \centering
  \small
  \begin{tabular}{lcccc}
    \toprule
    Method & Seed 13 & Seed 14 & Seed 15 & Seed 16 \\
    \midrule
    \EGAGNN{}                & 0.0958 & 0.0879 & 0.0923 & 0.0969 \\
    GSN ($k=6$, bond types)  & 0.1702 & 0.1926 & 0.1706 & 0.1566 \\
    GSN ($k=4$, bond types)  & 0.2651 & 0.2366 & 0.2477 & 0.2393 \\
    GSN ($k=6$, no bonds)    & 0.2222 & 0.2073 & 0.2257 & 0.2119 \\
    GSN ($k=4$, no bonds)    & 0.3390 & 0.3162 & 0.3019 & 0.3224 \\
    GatedGCN-MLP             & 0.2726 & 0.2748 & 0.2926 & 0.2501 \\
    GatedGCN                 & 0.2730 & 0.2859 & 0.2838 & 0.2687 \\
    GIN                      & 0.3078 & 0.3200 & 0.3027 & 0.3305 \\
    GCN                      & 0.4329 & 0.4537 & 0.4303 & 0.4319 \\
    \bottomrule
  \end{tabular}
\end{table}
 
\begin{table}[h]
  \caption{Descriptor study and ablation, \textsc{Zinc} test MAE per seed.
           Architecture and bond attributes identical throughout.}
  \label{tab:descriptors-seeds}
  \centering
  \small
  \begin{tabular}{lcccc}
    \toprule
    Structural descriptor & Seed 13 & Seed 14 & Seed 15 & Seed 16 \\
    \midrule
    none (constant)                    & 0.2002 & 0.2066 & 0.2031 & 0.2077 \\
    Gaussian noise                     & 0.2182 & 0.2393 & 0.2541 & 0.2447 \\
    triangle count                     & 0.2164 & 0.2320 & 0.2218 & 0.1913 \\
    cycles $3$--$4$                    & 0.2085 & 0.1974 & 0.2167 & 0.1857 \\
    cycles $3$--$6$                    & 0.1307 & 0.1209 & 0.0971 & 0.1113 \\
    cycles $3$--$8$                    & 0.1146 & 0.0904 & 0.0959 & 0.1010 \\
    $\mult{e}$ only                    & 0.1813 & 0.1328 & 0.1289 & 0.1769 \\
    $\egirth{e}$ only                  & 0.0890 & 0.0973 & 0.0882 & 0.0908 \\
    $(\egirth{e}, \mult{e})$ + bridge  & 0.0958 & 0.0879 & 0.0923 & 0.0969 \\
    \bottomrule
  \end{tabular}
\end{table}
 
\begin{table}[h]
  \caption{Contingency under the criterion actually used (both
           $\egirth{e}$ and $\mult{e}$ constant) and under the weaker
           criterion that constrains $\egirth{e}$ alone. The weaker criterion
           admits $103$ pairs instead of $90$, and $12$ of the additional $13$
           are resolved, which is why Remark~\ref{rem:lambda} states that a
           girth-only version of Proposition~\ref{prop:degeneracy} would be
           false.}
  \label{tab:contingency-full}
  \centering
  \small
  \begin{tabular}{llcccc}
    \toprule
    Criterion & Method & res.\ $\wedge$ egr & res.\ $\wedge$ $\neg$egr
              & $\neg$res.\ $\wedge$ egr & $\neg$res.\ $\wedge$ $\neg$egr \\
    \midrule
    $(\egirth{e}, \mult{e})$ & \EGAGNN{}           & \textbf{0} & 194 & 90 & 116 \\
    $(\egirth{e}, \mult{e})$ & Edge-girth multiset & \textbf{0} & 193 & 90 & 117 \\
    \midrule
    $\egirth{e}$ only        & \EGAGNN{}           & 12 & 182 & 91 & 115 \\
    $\egirth{e}$ only        & Edge-girth multiset & 12 & 181 & 91 & 116 \\
    \bottomrule
  \end{tabular}
\end{table}
 
\begin{table}[h]
  \caption{The \textsc{Regular} category, stratified by strong regularity.
           Edge-girth-regularity and strong regularity coincide on all $100$
           pairs: every edge-girth-regular pair is strongly regular and
           conversely, so no pair of this category separates the two
           properties.}
  \label{tab:regular-strat}
  \centering
  \small
  \begin{tabular}{lcccc}
    \toprule
    Subset ($n=50$ each) & res.\ $\wedge$ egr & res.\ $\wedge$ $\neg$egr
                         & $\neg$res.\ $\wedge$ egr & $\neg$res.\ $\wedge$ $\neg$egr \\
    \midrule
    Plain regular (not strongly regular) & 0 & 49 & \phantom{0}0 & 1 \\
    Strongly regular                     & 0 & \phantom{0}0 & 50 & 0 \\
    \bottomrule
  \end{tabular}
\end{table}
 
\section{CSL}
\label{app:csl}
 
We report CSL for completeness, but exclude it from the main text because two
reference implementations proved unreliable on it.
 
\begin{table}[h]
  \caption{CSL, $150$ graphs, $10$ classes, five-fold cross-validation, $100$
           epochs. Chance level is $0.100$. Every method returns the same
           accuracy on all five folds, so all standard deviations are zero.
           Starred rows are unreliable and should not be read as results.}
  \label{tab:csl}
  \centering
  \small
  \begin{tabular}{lc}
    \toprule
    Method & Test accuracy \\
    \midrule
    \EGAGNN{}                & 0.300 \\
    GSN                      & 0.300 \\
    Edge-girth multiset      & 0.300 \\
    Exact $2$-FWL$^{\star}$  & 0.200 \\
    GCN                      & 0.100 \\
    GIN                      & 0.100 \\
    GatedGCN                 & 0.100 \\
    PPGN$^{\star}$           & 0.100 \\
    \bottomrule
  \end{tabular}
\end{table}
 
Training uses Adam with learning rate $10^{-3}$ and weight decay $10^{-5}$,
cross-entropy loss, batch size $16$, three propagation layers of width $32$,
and constant node features. Folds are stratified with seed $13$.
 
\paragraph{Why PPGN is unreliable here.}
Its training loss moves (from $1962$ at the first epoch to $2.73$ at the
hundredth) but never falls below the chance-level cross-entropy
$\ln 10 \approx 2.303$; the minimum reached over $100$ epochs is $2.462$, at
epoch $87$. The model trains without learning a discriminative representation,
and its accuracy of exactly $0.100$ carries no information either way.
 
\paragraph{Why exact $2$-FWL is unreliable here.}
On BREC, our $2$-FWL implementation compares two graphs' jointly canonicalised
stable pair-colourings for exact multiset equality: a direct, exact, unlearned
pairwise decision. CSL is a classification task and requires a per-graph
vector, so the colouring is reduced to the sorted histogram of stable-colour
class sizes and passed to a logistic regression. That histogram is a lossy
summary (two $2$-FWL-inequivalent graphs can in principle share one) and
the prediction depends on a classifier trained on other graphs. The resulting
$0.200$ therefore measures the summary and the classifier, not the separating
power of $3$-WL, and we do not report it as a $3$-WL baseline.
 
The uniform absence of fold-to-fold variance in Table~\ref{tab:csl} is
consistent with the tasks these methods solve on CSL being deterministic: each
method either separates a given skip-length class or does not, and the
stratified folds contain the same classes.
\end{document}